\newtheorem{theorem}{Theorem}
\newtheorem{lemma}[theorem]{Lemma}
\DeclareMathOperator{\Tr}{Tr}
\crefname{section}{Sec.}{Secs.}
\Crefname{section}{Section}{Sections}
\Crefname{table}{Table}{Tables}
\crefname{table}{Tab.}{Tabs.}
\begin{document}


\title{Feature Correlation-guided Knowledge Transfer for Federated \\ Self-supervised Learning}



\author{%
  Yi~Liu$^{1}$,
  Song~Guo$^{1}$,
  Jie~Zhang$^{1}$,
  Qihua~Zhou$^{1}$,
  Yingchun~Wang$^{1}$,
  and~Xiaohan~Zhao$^{2}$\\
  \textsuperscript{1}Department of Computing, The Hong Kong Polytechnic University\\
  \textsuperscript{2}Institute for Applied Computational Science, Harvard University\\
  \texttt{\{joeylau.liu,qi-hua.zhou,daenerys.wang\}@connect.polyu.hk},\\
  \texttt{\{song.guo,jie-comp.zhang\}@polyu.edu.hk}, \\ \texttt{xiaohanzhao@g.harvard.edu}\\
}

\maketitle

\begin{abstract}
\vspace{-0.4cm}
    
    To eliminate the requirement of fully-labeled data for supervised model training in traditional Federated Learning (FL),  
    extensive attention has been paid to the application of Self-supervised Learning (SSL) approaches on FL to tackle the label scarcity problem.
    Previous works on Federated SSL generally fall into two categories: parameter-based model aggregation (i.e., FedAvg, applicable to homogeneous cases) or data-based feature sharing (i.e., knowledge distillation, applicable to heterogeneous cases) to achieve knowledge transfer among multiple unlabeled clients.
    Despite the progress, all of them inevitably rely on some assumptions, such as homogeneous models or the existence of an additional public dataset, which hinder the universality of the training frameworks for more general scenarios.
    Therefore, in this paper, we propose a novel and general method named \underline{\textbf{Fed}}erated Self-supervised Learning with \underline{\textbf{F}}eature-c\underline{\textbf{o}}rrelation based \underline{\textbf{A}}ggregation (FedFoA) to tackle the above limitations 
    in a communication-efficient and privacy-preserving manner.
    Our insight is to utilize feature correlation to align the feature mappings and calibrate the local model updates across clients during their local training process. More specifically, we 
    design a factorization-based method to extract the cross-feature relation matrix from the local representations.
    Then, the relation matrix can be regarded as a carrier of semantic information to perform the aggregation phase.
    We prove that FedFoA is a model-agnostic training framework and can be easily compatible with state-of-the-art unsupervised FL methods. 
    Extensive empirical experiments demonstrate that our proposed approach outperforms the state-of-the-art methods by a significant margin. 

\end{abstract}
\vspace{-0.4cm}
\section{Introduction}
\label{sec:intro}
\begin{figure}[t]
    \centering
    \setlength{\belowcaptionskip}{0mm} 
    \includegraphics[width=0.85\linewidth]{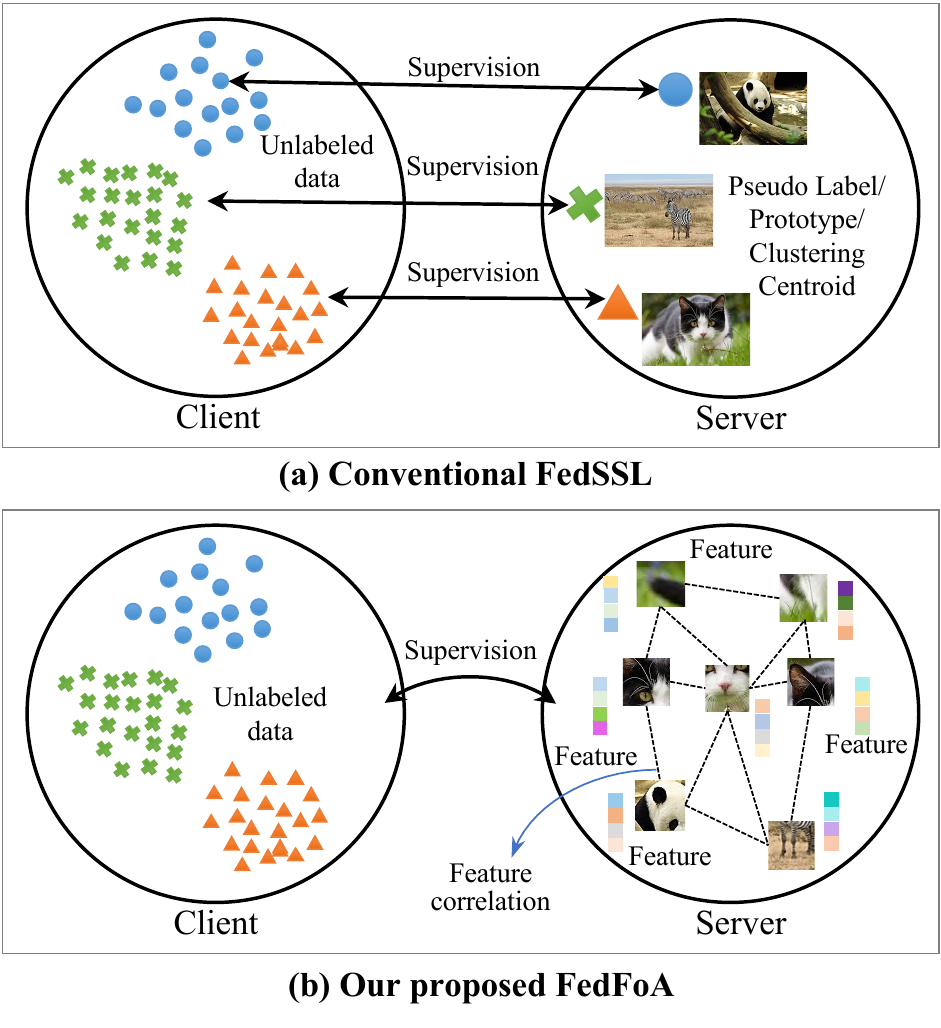}
    \caption{Comparison with conventional FedSSL methods. (a). In conventional FedSSL, knowledge transfer usually relies on local data information sharing (i.e., with the supervision of pseudo label \cite{jeong2020federated}, prototype \cite{zhang2020federated}, clustering centroid \cite{lubana2022orchestra}); While in (b), we focus on utilizing feature-correlation to transfer knowledge among different clients.}
    \label{fig:comparision}
    \vspace{-0.4cm}
\end{figure}

To deal with the privacy issues in distributed machine learning, Federated Learning (FL) \cite{mcmahan2017communication,zhang2021edge} has emerged as a promising paradigm by collaboratively training machine learning models without exposing the raw data of each client. 
As a common practice in FL, the global model is usually obtained by periodically averaging the updated model parameters from the clients in a centralized server. 
While existing FL methods assume that 
data is fully labeled so that supervised learning can be used for the model update on each client, in some real-world applications, labeling all the data is usually unrealistic due to high labor costs and the requirement of expert knowledge. For example, in medical diagnosis, the patient would be reluctant to contribute to the labeling process for privacy concerns \cite{ng2021federated}.

In this case, Self-supervised Learning (SSL) can be regarded as a promising paradigm to tackle the label scarcity problem in FL. With the concept of learning remarkable representations on a set of unlabeled data, SSL allows higher-level data representations to be extracted from raw data using deep neural networks, such that the learned representations can be easily used for downstream tasks \cite{van2020towards}. 
Many prior works have applied SSL to FL by directly extending unsupervised learning approaches to decentralized settings, where only unlabeled data are available at each client,
such as BYOL \cite{komodakis2018unsupervised}, SimSam \cite{chen2021exploring}. Considering the statistical characteristics of decentralized data, i.e., non-independent and identically distributed (Non-IID) data, FedU \cite{zhuang2021collaborative}, FedReID\cite{zhuang2020performance}, and FedEMA \cite{zhuang2021divergence} propose to dynamically control the knowledge transfer between global model and local models, while FedCA \cite{zhang2020federated} focuses on sharing features on an additional public dataset to reduce the divergence among different local models.

However, we found that these solutions above still fail to achieve the desired performance due to the following two limitations: 
\textit{1) inconsistency of feature representations}. In FL environment, limited unlabeled training data in each client may lead to overfitting of local models, resulting in discrepancy in feature representations from client to client. Worse still, Non-IID data would aggravate the inconsistency between representations without unified information among clients.
Although some previous works conduct the feature alignment by introducing an auxiliary public dataset for feature sharing (i.e., Fig.~\ref{fig:comparision}(a)), the selection of a public dataset may bring new challenges such as communication overhead.
Therefore, it is critical to design a data-free knowledge transfer scheme to calibrate the divergence among local models.
\textit{2) Dependence on homogeneous model aggregation}. Most of works rely on uploading the local models to perform knowledge aggregation, which requires all clients to hold homogeneous model architecture. This setting greatly limits the universality of the training framework in various scenarios, especially for clients with heterogeneous neural architectures.

To tackle these limitations, in this paper, we propose a novel and general method named \underline{\textbf{Fed}}erated Self-supervised Learning with \underline{\textbf{F}}eature-c\underline{\textbf{o}}rrelation based \underline{\textbf{A}}ggregation (FedFoA) to achieve data-free and model-agnostic collaborative training framework in a communication-efficient and privacy-preserving manner.
The key insight of FedFoA is to utilize feature correlation to align the feature mappings and calibrate the local model updates across clients during their local training process (i.e., Fig.~\ref{fig:comparision}(b)). More specifically, we design a factorization-based method to extract the cross-feature relation matrix from the local representations.
Then, the relation matrix can be regarded as a carrier of semantic information to perform the aggregation phase.
We prove that FedFoA is a model-agnostic training framework and can be easily compatible with state-of-the-art unsupervised FL methods. 
Extensive empirical experiments on various benchmarks and downstream tasks demonstrate that FedFoA outperforms the state-of-the-art methods by a significant margin in terms of model accuracy and communication efficiency. 
The contributions of the paper are summarized as follows:


\begin{itemize}
\item To the best of our knowledge, we are the first to explore the shared cross-feature correlation among different clients, and explicitly reveal the benefits of feature-correlation-based knowledge transfer in federated self-supervised learning.
\item We design a factorization-based method to extract shared feature correlation from local data and use it as a medium for knowledge transfer.
\item We propose a general training framework to align feature representations among different clients so as to perform unbiased knowledge aggregation process.
\item Extensive experiments on three typical image classification tasks and two different data settings show the superior performance of FedFoA over the state-of-the-art approaches.
\end{itemize}

\section{Related Work}\label{sec:related_work}

In this section, we summarize the prior related work in three aspects: federated learning, self-supervised learning and federated self-supervised learning.  

\subsection{Federated Learning}
Federated learning is a distributed machine learning paradigm that allows multiple clients to collaboratively train one global model through a parameter server without sharing their raw data. In the classic algorithm FedAvg\cite{mcmahan2017communication}, the global model parameters are calculated by a weighted average of local clients' models, in which the weight of a local model is positively correlated with the data volume. Another line of works focuses on client loss rather than data volume \cite{mohri2019agnostic, li2019fair}. Instead of directly averaging the global model, Wang \etal~\cite{wang2020federated} proposed to match neurons in a layer-wise manner before aggregation to achieve better convergence. 
Fedpara\cite{hyeon2021fedpara} proposed to utilize a factorization-based method to compress parameter size and reduce communication overhead of weights aggregation. 
However, all these weighted aggregation schemes rely on knowledge transfer via model parameters, which require all clients to hold homogeneous model architecture.
%
To further address model heterogeneity in FL, 
FedMD\cite{li2019FedMD} and FedDF\cite{lin2020ensemble} design distillation-based methods that use additional public datasets or generated data as the common input and aggregate the features among clients. Instead of introducing an additional dataset, FedProto\cite{tan2021fedproto} generates label-associated feature prototypes computed by the private datasets to perform knowledge transfer. However, this methodology is only applicable to supervised learning tasks, where labeled data are available for each client.

\subsection{Self-supervised Learning}

Self-supervised learning (SSL) aims to learn a general feature representation from unlabeled dataset. The preliminaries of SSL can be classified into two different categories: generative methods and discriminative methods. The principle of generative methods is to learn the representations via generating pixels of input \cite{vincent2008extracting, goodfellow2020generative}. 
While in discriminative methods, proxy tasks are used to guide representation learning \cite{gidaris2018unsupervised, noroozi2016unsupervised, pathak2016context, zhang2016colorful}. As one promising branch, contrastive learning-based methods \cite{oord2018representation, bachman2019learning} define two augmented samples of the same input as a positive pair, and two different samples as a negative pair. In the training process, they use contrastive loss to minimize the distance between positive pairs while maximizing the distance between negative pairs.  MoCo\cite{he2020momentum} generates the negative samples from a memory bank and SimCLR\cite{chen2020simple} find the negative pairs in large batch size. BYOL\cite{grill2020bootstrap} and SimSiam\cite{chen2021exploring} explore more efficient approaches that skip the negative pairs training and only focus on positive pairs.

\subsection{Federated Self-supervised Learning}
Federated self-supervised learning (FedSSL) is still a nascent topic with the goal of learning representations from decentralized unlabeled data while protecting data privacy. 
Some prior works on FedSSL develop a direct combination of self-supervised learning techniques with FL, such as
\cite{van2020towards,jin2020towards}. To further tackle the non-independent and identically distributed (Non-IID) data challenges in FL, 
clustering-based frameworks 
\cite{dennis2021heterogeneity,lubana2022orchestra}, auto-encoding-based method \cite{zhang2020federated}, and contrastive learning-based approaches (i.e., FedU \cite{zhuang2021collaborative} based on Siamese, FedReID \cite{zhuang2020performance}, FedX~\cite{han2022fedx} and FedAMA \cite{zhuang2021divergence} based on BYOL) are proposed to reduce the divergence in feature representations among different clients caused by heterogeneous local training processes.
%
However, all above methods require that all clients own homogeneous network architecture, which may be infeasible in some real-world applications. The federated self-supervised learning in the heterogeneous systems has not been explored yet. To the best of our knowledge, this work is the first attempt to tackle the model heterogeneity problem in the FedSSL setup.
\section{Methodology}

In this section, we explain the motivation and technical details of FedFoA. 
In federated unsupervised learning, though various clients may hold different views of valuable representations, the relationship between features is homogeneous since their dataset belong to a unit task (Fig.~\ref{fig:onecol}). Instead of directly aggregating the model weights, we align the features among clients from a new perspective: \textit{the semantic relationship between features}. 

\subsection{Problem Statement}
 

In this subsection, we first review the standard FedSSL problem.
%
%
The objective of federated self-supervised learning is to assemble the knowledge from multiple decentralized clients to learn general representations for downstream tasks. Supposing there are $N$ clients with heterogeneous neural networks in the federated self-supervised learning system. Each client holds an unlabeled dataset $\mathcal{D}_i = \{\mathcal{X}_i\}$. We uniformly express all the self-supervised learning loss functions, both contrastive and non-contrastive, as $f(\omega)$, where $\omega$ represents the model parameters of the online encoder. In federated learning, the global objective is to learn a general model parameter set that can fit the data from all participated clients. The global loss function can be expressed as:
\begin{equation}
    f(\omega) = \sum_{i=1}^N{D_i\over D} f_i(\omega_i),
\end{equation}
where $D_i$ represents the data volume of client $i$, $D=\sum_{i=1}^N D_i$. The goal of conventional federated learning is to solve the following optimization problem:
\begin{equation}
    \omega^* = \mathop {\arg \min }_\omega f(\omega).
\end{equation}

Taking contrastive loss as an example, in the local training of FedSSL, each client samples a mini-batch of data and uses data augmentation to generate two different views. The same image on different views will be taken as positive pair, and two different images will be regarded as negative pair. Suppose the batch size is $m$, the conventional contrastive loss is represented as:
\begin{equation}
    \ell_{c}(z_i,z_j) = -\log {{\exp(\text{sim}(z_i,z_j)/\tau)}\over{\sum_{k=1}^{2m} \mathbbm{1}_{k\neq i} \exp(\text{sim}(z_i, z_k)/\tau)}},
\end{equation}
where $\text{sim}(u,v)$ represents the cosine similarity between vector $u$ and $v$. $\mathbbm{1}\in \{0,1\}$ represents the positive pair indicator and $\tau$ denotes a temperature parameter.

However, in the heterogeneous federated learning system, it is impossible to maintain a shared unified global model. Each client might hold a unique model and the dimensions of the model parameters among clients may not match. As a result, parameter-based model aggregation becomes infeasible. Though the model weights' knowledge can not be transferred, the knowledge in feature correlation can still be transferred in a model-agnostic way. Our goal is to develop a novel method that can extract the feature correlation knowledge and use it to replace the weights aggregation in heterogeneous federated self-supervised learning.   
%


\begin{figure}[t]
    \centering
    \includegraphics[width=0.9\linewidth]{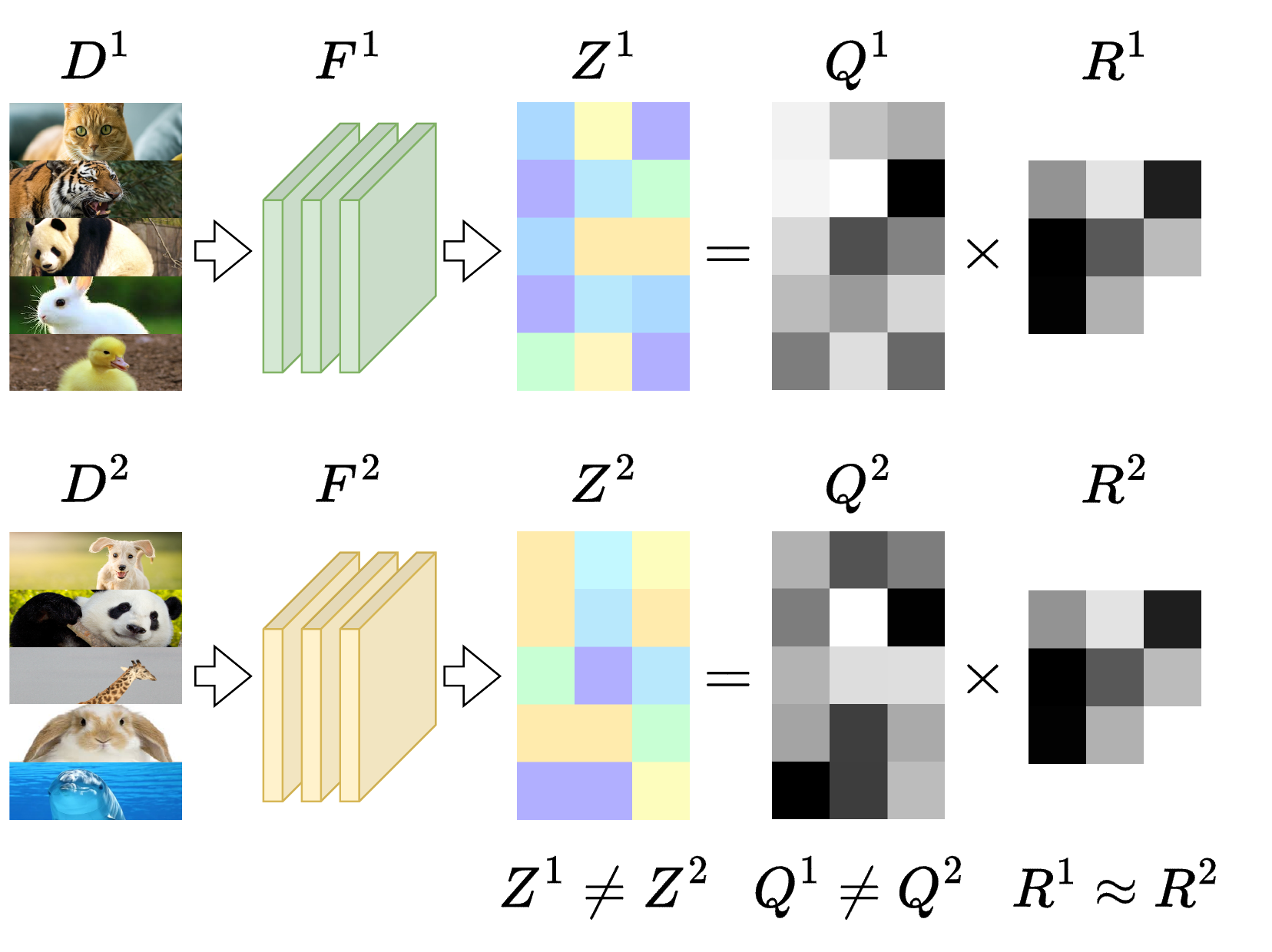}
    \caption{The feature-correlation can be extracted by using QR-based method \cite{gander1980algorithms}. For well trained models on the same task, their feature correlation tends to be similar.}
    \label{fig:onecol}
    \vspace{-0.2cm}
\end{figure}

\subsection{QR-based Feature-correlation Extraction} \label{QR}

In this section, we introduce the motivation and methodology of feature correlation extraction in FedFoA.

Like every color in the real-world can be represented as a linear combinations of red, green, and blue, we believe that the feature representation can also be factorized into semantic basis vectors and the linear coefficients that maps the basis vector space to the original feature vector space. 

To factorize the feature representation into the basis vectors and corresponding linear coefficients, we utilize a simple but effective factorization method, QR decomposition \cite{gander1980algorithms, sharma2013principal}. Though Singular Value Decomposition (SVD) can also complete this task, in our scenario, QR decomposition has three advantages: 1) QR decomposition is numerically stable and the solution is unique. By contrast, SVD has multiple factorization solutions over one matrix. It is difficult for clients to share knowledge via an unstable carrier. 2) QR decomposition is computationally efficient. QR decomposition has been applied in Principal Component Analysis (PCA)\cite{sharma2013principal} and proved to be more efficient than SVD-based method. 3) The $R$ matrix of QR composition has double physical meaning in our scenario. Except for the linear coefficients, $R$ also represents the linear correlations among original feature vector space. For distributed clients with the same learning task, we believe sharing feature correlations and utilizing it in the training process is beneficial. It can facilitate clients to learn a more general feature representations.



Suppose the size of client $i$'s dataset is $m$ and the dimension of feature representation is $n$, then the learned feature can be represented as a $m\times n$ matrix $Z^i_{t,b}$ ($m\gg n$), where $t$ denotes the current round's index and $b$ represents the batch-id. 
%
The feature can be further factorized to one orthogonal matrix $Q^i_{t,b}$ and one upper triangular matrix $R^i_{t,b}$:
\begin{equation}
    F_{qr}(Z^i_{t,b}) \to Q^i_{t,b}, R^i_{t,b},
\end{equation}
where the dimension of $Q^i_{t,b}$ is a $m\times n$ and $R^i_{t,b}$ is a $n\times n$ matrix. The QR decomposition extracts the basis vector $Q^i_{t,b}$ out of $Z^i_{t,b}$, and $R^i_{t,b}$ is the linear coefficients that maps the basis space back to the original feature space. Hence, $Z^i_{t,b}$, $Q^i_{t,b}$, $R^i_{t,b}$ satisfy $Z^i_{t,b}t = Q^i_{t,b}R^i_{t,b}$. 

It should be noticed that QR decomposition is highly sequentially biased. In the calculation process, it automatically sets the first column of the given matrix as the first basis vector. In the following iterations, it calculates the linearly independent part of current row from previous basis and then uses the result as the current row's basis. In other words, the $R$ matrix also represents the cross-feature relation of a given collection of feature representations. Each value in the diagonal of $R$ matrix represents the independence of corresponding feature vector. Since the QR decomposition is highly sequentially biased, the first feature vector is the most independent one. As a result, the value on the diagonal of $R$ follow a decreasing pattern. 

As the learning process continues, the independence of feature vectors increases. We use the trace of $R$ matrix as the measurement of the independence of current feature representation. In Appendix A, we empirically prove that the downstream performance is positively correlated with the independence of feature representation. 

\begin{figure*}[t]
    \centering
    \includegraphics[width=0.95\linewidth]{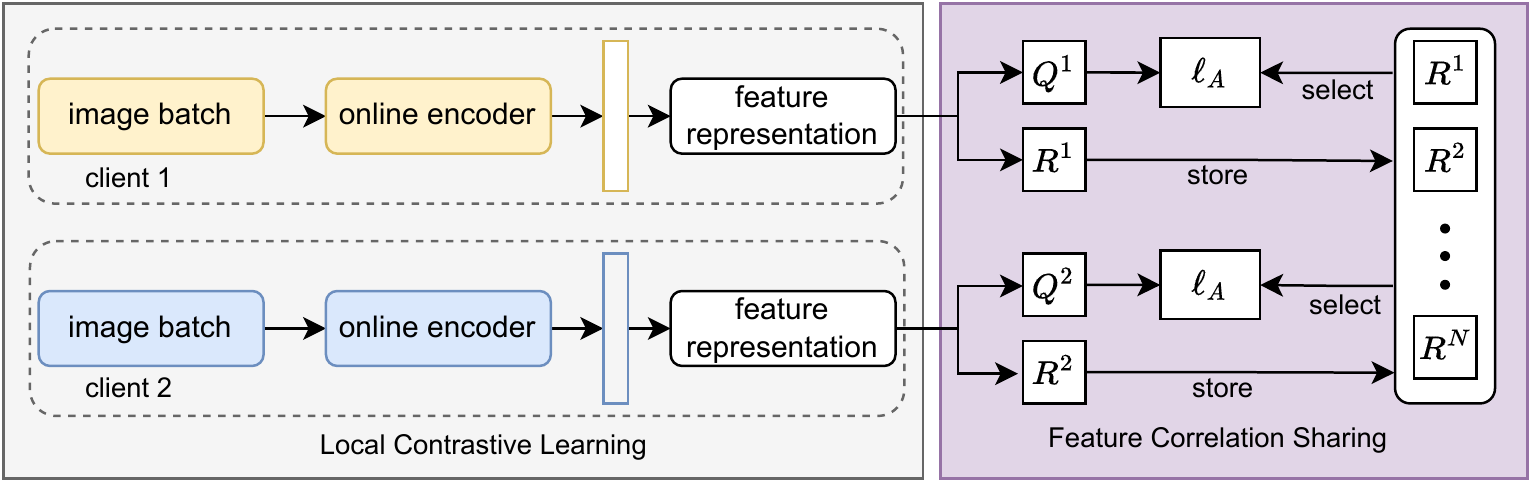}
    \caption{Illustration of FedFoA framework. FedFoA works in two seperated parts: 1). the contrastive local training; 2) the feature correlation knowledge sharing. FedFoA works in the second part. It only requires clients to share one of the factorized feature correlation matrix, preserving data privacy. The linear projector calibrates the features of heterogeneous network into one identical projection dimension. }
    \vspace{-0.4cm}
\end{figure*}

\subsection{Mutual Understanding Mechanism}
\label{mutual understanding}
After extracting the feature-correlation matrix by QR decomposition, the next step is to transfer knowledge through it. 
Instead of directly facilitating clients to learn a general cross-feature correlation matrix, we encourage each client to understand each other's cross-feature correlation. Motivated by this, we design a mutual-understanding mechanism to guide the training of clients.

Suppose client $i$ calculates it's own feature representation $Z^i$, and receives a feature correlation matrix $R^j$ from client $j$. If $i$ can recreate $Z^i$ based on $R^j$ and one basis vectors $Q$, we can say that $i$ understands $j$'s feature correlation. The optimal recreation can be formulated as:
\begin{equation}
\begin{aligned}
    & Q^* = \min_{Q} ||Z^i-QR^j||_2, \\
    & s.t. \quad  Q^\top Q = I
\end{aligned}
\end{equation}

This optimization problem can be solved by a auxiliary SVD decomposition:
\begin{equation}
    \begin{aligned}
         &Q^* = VU^\top, \\  
         &s.t. \quad   U, \Sigma, V = \text{SVD}(R^j{Z^i}^\top) \\
    \end{aligned}
\end{equation}

This optimization is proved in Lemma~\ref{lemma}. Once we have the optimization solution, each client can recreate its own feature representation based on the feature correlation matrix of others. A successful recreation means two clients have a good mutual understanding. Guided by this logic, we design a regularization term to facilitates clients to enhance their mutual understanding:
\begin{equation}
\label{loss}
    \ell_{A}(Z^i,R^j) = ||Z^i - Q^*R^j||
\end{equation}

The Regularization term $\ell_{A}$ can be regarded as a plug-and-play module 
that can be used standalone or stacked on top of existing FedSSL training frameworks.

\begin{lemma}
\label{lemma}
Given a $m\times n$ matrix $Z$ and a $n\times n$ matrix $R$, the optimal orthogonal
matrix $Q$ that minimizes $||QR-Z||_2$ is equal to $VU$, where $U$, $\Sigma$, $V$ is the SVD decomposed result of $RZ^\top$.
\end{lemma}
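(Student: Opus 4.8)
The plan is to recognize this as the classical orthogonal Procrustes problem and solve it by reducing the minimization of $\|QR-Z\|_2$ to the maximization of a single trace functional, then invoking the SVD together with a von Neumann--type trace inequality. I will read $\|\cdot\|_2$ as the Frobenius norm, since that is the convention under which the stated SVD-based optimizer is exact (the spectral-norm version does not reduce to a clean trace maximization).

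First I would expand the squared objective via $\|QR-Z\|_F^2 = \Tr\big((QR-Z)^\top(QR-Z)\big)$, which multiplies out to $\Tr(R^\top Q^\top Q R) - \Tr(R^\top Q^\top Z) - \Tr(Z^\top Q R) + \Tr(Z^\top Z)$. The constraint $Q^\top Q = I$ collapses the first term to the constant $\Tr(R^\top R)$, and the last term is independent of $Q$. Since $\Tr(R^\top Q^\top Z) = \Tr(Z^\top Q R)$ by transpose-invariance of the trace, minimizing the objective is equivalent to maximizing $\Tr(Z^\top Q R)$, which by the cyclic property equals $\Tr(R Z^\top Q) = \Tr(MQ)$ with $M := R Z^\top$.

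Next I would substitute the SVD $M = U\Sigma V^\top$ and push the orthogonal factors through the trace: $\Tr(MQ) = \Tr(\Sigma\, V^\top Q U)$. Setting $W := V^\top Q U$, a direct computation using $VV^\top = I$, $Q^\top Q = I$, and $U^\top U = I$ shows $W^\top W = I$, so $W$ has orthonormal columns and hence $|W_{ii}| \le 1$ for every $i$. Because $\Sigma$ carries the nonnegative singular values $\sigma_1,\dots,\sigma_n$ on its diagonal, I obtain $\Tr(\Sigma W) = \sum_i \sigma_i W_{ii} \le \sum_i \sigma_i$, with equality exactly when $W_{ii}=1$ for all $i$. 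For a matrix with orthonormal columns this forces the identity block $W = \begin{pmatrix} I \\ 0 \end{pmatrix}$, and unwinding $W = V^\top Q U$ yields $Q^* = V U^\top$ (reading $V$ as the thin factor), the claimed maximizer and hence the minimizer of the original norm.

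The main obstacle I anticipate is the dimensional bookkeeping: $Q$ is $m\times n$ with $m \gg n$, so $M = RZ^\top$ is $n\times m$ and its SVD factors are rectangular, which means I must be careful about whether $V$ denotes the full $m\times m$ orthogonal factor or the thin $m\times n$ factor — the identity $Q^* = V U^\top$ only parses under the thin convention. I would also flag the equality and uniqueness caveat: the trace bound pins down the optimal value for any spectrum, but if some $\sigma_i = 0$ the minimizer ceases to be unique, so strict uniqueness requires $M$ to have full rank $n$. Establishing the inequality $\sum_i \sigma_i W_{ii} \le \sum_i \sigma_i$ together with its sharp equality condition is the technical heart of the argument.
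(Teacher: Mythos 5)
Your proof is correct and follows essentially the same route as the paper's: expand the squared Frobenius norm via the trace, use $Q^\top Q = I$ to reduce the problem to maximizing $\Tr(RZ^\top Q)$, substitute the SVD of $RZ^\top$, and bound the resulting trace to conclude $Q^* = VU^\top$. Your handling of the final step is in fact tighter than the paper's, which informally drops the $\Sigma$ factor before bounding the trace (and writes $\Tr(I)=1$); your inequality $\sum_i \sigma_i W_{ii} \le \sum_i \sigma_i$ with the equality condition $W_{ii}=1$ is the rigorous version of that argument, and your remarks on the thin-SVD convention and non-uniqueness when $RZ^\top$ is rank-deficient are valid caveats the paper omits.
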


\begin{proof}
\begin{equation}
    \begin{aligned}
         &\mathop{\arg\min}||QR-Z||_2 \\
         \iff& \mathop{\arg\min}\Tr\left((QR-Z)(R^{\top}Q^\top-Z^\top)\right) \\
         \iff& \mathop{\arg\min}\Tr(QRR^\top Q^\top-ZR^\top Q^\top-QRZ^\top+ZZ^\top) \\
         \iff& \mathop{\arg\min}\Tr(RR^\top) + \Tr(ZZ^\top) - 2\Tr(Z^\top QR) \\
         \iff& \mathop{\arg\max}\Tr(RZ^\top Q)
    \end{aligned}
\end{equation}
Suppose we can use an auxiliary SVD decomposition to factorize $RZ^\top$ as $U$, $\Sigma$, $V$ that satisfy: $RZ^\top = U\Sigma V^\top$, $V^\top V=I$ and $U^\top U = I$. $\Sigma$ is a diagonal matrix of eigenvalues. 
Then the derivation of above optimization problem can be continued:
\begin{equation}
    \begin{aligned}
         &\mathop{\arg\max}\Tr(RZ^\top Q) \\
         \iff& \mathop{\arg\max} \Tr(U\Sigma V^\top Q) \\
         \iff& \mathop{\arg\max} \Tr(\Sigma V^\top QU) \\
         \iff& \mathop{\arg\max} \Tr(V^\top QU)
    \end{aligned}
\end{equation}
Since $V$, $Q$, $U$ are three orthogonal matrices, their product is also an orthogonal matrix. So we have following inequality:
\begin{equation}
    \Tr(V^\top QU) \leq \Tr(I) = 1
\end{equation}
The boundary condition that makes $V^\top QU = I$ is $Q = VU^\top$, so that we have the following conclusion:
\begin{equation}
    Q^* = VU^\top
\end{equation}
which completes the proof.
\end{proof}

\subsection{Round-wise Approximation}

We then explain the advantages of communication efficiency in this section. 
In the ideal setting, client $i$ only needs to upload its correlation matrix $R_t^i$ for every training batch. 
Though the feature correlation matrix $R_t^i$ has a relatively smaller shape than the original feature $Z_t^i$, the communication frequency is high since $R_t^i$ updates every batch. The communication overhead will become unbearable if we use $R_t^i$ to calculate our regularization term for every batch.

To further reduce the communication overhead, we use round-wise aggregation rather than batch-wise aggregation. Client $i$ calculates the local average correlation matrix in one training round $\Bar{R}_t^i$ by:
\begin{equation}
    \Bar{R}_t^i = {1\over B} \sum_{b=0}^B R^i_{t,b},
\end{equation}
where $B$ is the total batch number in one training round. Then $i$ updates $\Bar{R}_t^i$ to the memory bank as its representative correlation matrix. Since QR decomposition is numerically stable and the solution is unique, such an approximation does not undermine the semantic information in feature correlation. The experiment in Appendix C empirically proves that this round-wise approximation can save the traffic cost in a large volume while still maintain outstanding performance.

\begin{algorithm}[t]
    \SetKwInput{KwIn}{Input}
    \SetKwInput{KwOut}{Output}
    \SetKwFunction{CLIENT}{Client}
    \SetKwFunction{FOA}{FedFoA}
    \SetKwProg{Fn}{Function}{:}{}
	\caption{The workflow of FedFoA.}
    \label{algo}
        \textcolor{black}{\KwIn{learning rate $\eta$, total training rounds T, local batches B, local dataset $\mathcal{D}^0\sim \mathcal{D}^N$}
        \KwOut{model weights from clients $\omega^0 \sim \omega^N$}
        \For{\textup{training round} $t$ in $1,2,\cdots, T$}{
            \For{\textup{client} $i \in \mathcal{N}$}{
                \CLIENT{$\mathcal{D}^i, \mathcal{R}$}
            }
        }
        \Fn{\CLIENT{$\mathcal{D}^i, \mathcal{R}$}}{
            \For{\textup{batch} $b \in 1,2,\cdots, B$}{
                $Z_{t,b}^i \gets F(X_{t,b}^i, \omega^i)$  \\
                $Q_{t,b}^i, R_{t,b}^i \gets F_{qr}(Z_{t,b}^i)$ \\
                $\ell = \ell_{c}(Z_{t,b}^i)$ \\
                \If{\textup{round} $t>t_{warm}$}{
                    $\ell \gets $ \FOA{$\ell, Z_{t,b}^i, R_{t,b}^i,\mathcal{R}$}
                }
                $\omega^i \gets \omega^i - \ell'(\omega^i)$
            }
            $\Bar{R}_t^i \gets {1\over B}\sum_{b=0}^B R_{t,b}^i$\\
            update memory bank $\Bar{R}_t^i \to \mathcal{R} $
        }
        \Fn{\FOA{$\ell, Z_{t,b}^i,R_{t,b}^i,\mathcal{R}$}}{
            \For{\textup{client} $j \in \mathcal{N}$}{
                \If{$j\neq i$}{
                    $\Bar{R}_{t-1}^j \gets \mathcal{R}$ \\
                    \If{$\Tr(\Bar{R}_{t-1}^j)> \Tr(R_{t,b}^i)$}{
                        $U,\Sigma,V \gets \text{SVD}(\Bar{R}_{t-1}^j, {Z_{t,b}^i}^\top)$ \\
                        $Q^* = VU^\top$ \\
                        $\ell_{A} = ||Z_{t,b}^i - Q^*\Bar{R}_{t-1}^j||$ \\ 
                        $\ell \gets \ell + \lambda \ell_{A}$
                    }
                }
            }
            return $\ell$
        }}
\end{algorithm}

\subsection{FedFoA}


In this section, we explain the FedFoA framework in details. We summarize the workflow of FedFoA in Algorithm~\ref{algo}. The feature space among clients might vary in the heterogeneous federated learning system. To unify the dimension, we add one linear calibration layer at the tail part to calibrate the various feature dimension among clients into an identical projection space. In our framework, the conventional contrastive loss is also calculated based on the projection space.
In every training round, all clients calculate their correlation matrix and then upload them to the global memory bank $\mathcal{R}$. Since the semantic information might be chaotic in the first few training rounds, we set warm-up limits $t_{warm}$. The feature correlation knowledge transfer starts after the first $t_{warm}$ rounds. 
In the following training iterations, each client follows a Person-to-Person (P2P) protocol to access another client's feature correlation matrix. As mentioned in Sec.~\ref{QR}, the trace of the $R$ matrix can be used to measure the learning process. 
Once the chosen correlation matrix has a higher trace, client $i$ uses the chosen correlation matrix to calculate our regularization term by Eq.~\ref{loss}. We use a hyper-parameter $\lambda$ to control the trade-off between traditional contrastive loss and our regularization term. 
\begin{equation}
    \ell = \ell_{C} + \lambda \ell_{A}
\end{equation}





During our feature correlation knowledge transfer process, the local feature privacy is guaranteed since the shared part is only a $R$ matrix. In the QR decomposition, the $Q$ matrix is independent of the $R$ matrix. There is no feasible solution to calculate $Q$ by $R$. Without $Q$, it is impossible to derive the original local feature by $R$ matrix only. We can also understand it from a security perspective: the feature correlation knowledge is encoded as a private key $Q$ and a public key $R$. The feature correlation knowledge transfer can be achieved by only communicating the public key without accessing the private key. Hence, information privacy is guaranteed.

\section{Evaluation}

\subsection{Experiment Setup}

In this section, we provide some basic experiment settings, and the detailed version is provided in Appendix B.  
First, we set two scenarios to evaluate the performance: heterogeneous federated learning system and homogeneous federated learning system. Since FedFoA plays different roles in these two scenarios, we compare the performance under two experiment settings. 

\noindent\textbf{Heterogeneous setting:} We choose BYOL\cite{grill2020bootstrap}, FedMD\cite{li2019FedMD}, and FedDF \cite{lin2020ensemble} as our baselines. Though FedMD, and FedDF are proposed to solve the heterogeneous federated learning in supervised setting, their methodology are general solutions and do not rely on labeled public dataset.
In the heterogeneous setting, we use 4 clients with 4 different model architectures: ResNet-18, ResNet-34, AlexNet and VGG-9. 
After training, we use the downstream task to individually evaluate the accuracy of each client. And we use the mean accuracy as the evaluation criteria.

\noindent\textbf{Homogeneous setting:} We choose FedBYOL, FedU \cite{zhuang2021collaborative} and FedEMA \cite{zhuang2021divergence} as our baselines. FedFoA aims to transfer the feature correlation knowledge among clients. Hence, the benefit of FedFoA is orthogonal to baselines. We combine FedFoA with three baselines as integrated algorithms and demonstrate the improvement brought by FedFoA. In the homogeneous system, we set Resnet18 as the global model and evaluate the downstream task performance on the global model.

    
\noindent\textbf{Other general setting:} Our experiments are established on two datasets: CIFAR-10 and CIFAR-100. The data distribution follows the independent and identical distribution. We set 500 as the batch size, 0.032 as the learning rate, and 256 as the projection dimension. The downstream task follows a linear evaluation protocol, and we train a single linear layer of 200 epochs to evaluate the performance. In FedFoA, we set the warm up rounds as 5 and $\lambda$ as 0.01. All experiments are conducted on NVIDIA RTX 3090.



\subsection{Performance Evaluation}

In this section, we compare the performance of FedFoA under heterogeneous setting and homogeneous setting. 
We summarize the experiment result of CIFAR-10 and CIFAR-100 under heterogeneous federated learning under in Table~\ref{table_hetero_cifar10} and Table~\ref{table_hetero_cifar100}. We set the BYOL algorithm as the core contrastive learning algorithm for each client to do local training. Without any aggregation method, the performance among clients varies. The model architecture that has the highest down stream accuracy is ResNet-18 while the lowest is AlexNet. We calculate the average accuracy as the evaluation criterion. 
We apply FedMD, FedDF, and FedFoA as aggregation methods to improve their performance. 

\textbf{Heterogeneous Case.} As Table \ref{table_hetero_cifar10} shows, FedMD slightly improves the accuracy from 74.33\% to 74.41\%, and FedDF improves the accuracy to 75.55\%. FedFoA can improve the accuracy to 77.35\%. Hence, our FedFoA outperforms the other two baselines by a significant margin. The experiment on CIFAR-100 also demonstrates this conclusion. 

By aligning the feature representations among heterogeneous clients over a public dataset, FedMD successfully improves the accuracy of AlexNet and VGG-9 but undermines the performance of ResNet-18 and ResNet-34 on a small scale. Though the overall influence of FedMD is still positive, it shows the limitations of FedMD that it relies heavily on the quantity and quality of the public dataset. FedDF uses a collaboratively generated dataset as the carrier of knowledge to prevent the limitation of the public dataset. It aims to share knowledge at the feature level and force all clients to have a common feature representation. 

In our analysis, FedFoA has three advantages that make it outstanding: 1). FedFoA uses the knowledge embedded in the private dataset so that it does not require any public dataset. 2). Instead of assembling feature representations among heterogeneous clients, FedFoA facilitates clients to have a mutual understanding of each other's feature correlation. 3). To prevent the negative effect of the clients with lower performance, we filter them out of the training process by simple trace metrics of their correlation matrix. 

\textbf{Homogeneous Case.} The experiment result of homogeneous federated learning is shown in Table~\ref{homogeneous}. Since FedFoA focuses on feature correlation knowledge transferring, the benefits of FedFoA are orthogonal to other state-of-the-art approaches. Hence, in this section, we use BYOL-based federated self-supervised learning approaches as our baselines and evaluate the performance improvement brought by the regularization term of FedFoA. 

As Table~\ref{homogeneous} shows, FedFoA significantly improves the performance of all baselines. The best method is the integration with FedEMA and FedFoA, reaching 78.82\% in CIFAR-10 and 49.87\% in CIFAR-100. 
FedFoA brings marginal accuracy improvements of 2.7\% and 6.04\% over two tasks respectively. As demonstrated in Figure~\ref{acc}, the integrated algorithm learns faster than FedEMA solo. Reaching 40\% down stream accuracy costs the integrated algorithm 30 rounds of training. By contrast, it will take FedEMA for more than 50 training rounds. 

We evaluate different trade-off settings between conventional contrastive loss and our regularization term by controlling the value of $\lambda$. As Figure~\ref{lambda} shows, the most suitable value of $\lambda$ under CIFAR-100 task is 0.01. 

\begin{figure}
    \centering
    \setlength{\abovecaptionskip}{0mm} 
    \includegraphics[width=0.85\linewidth]{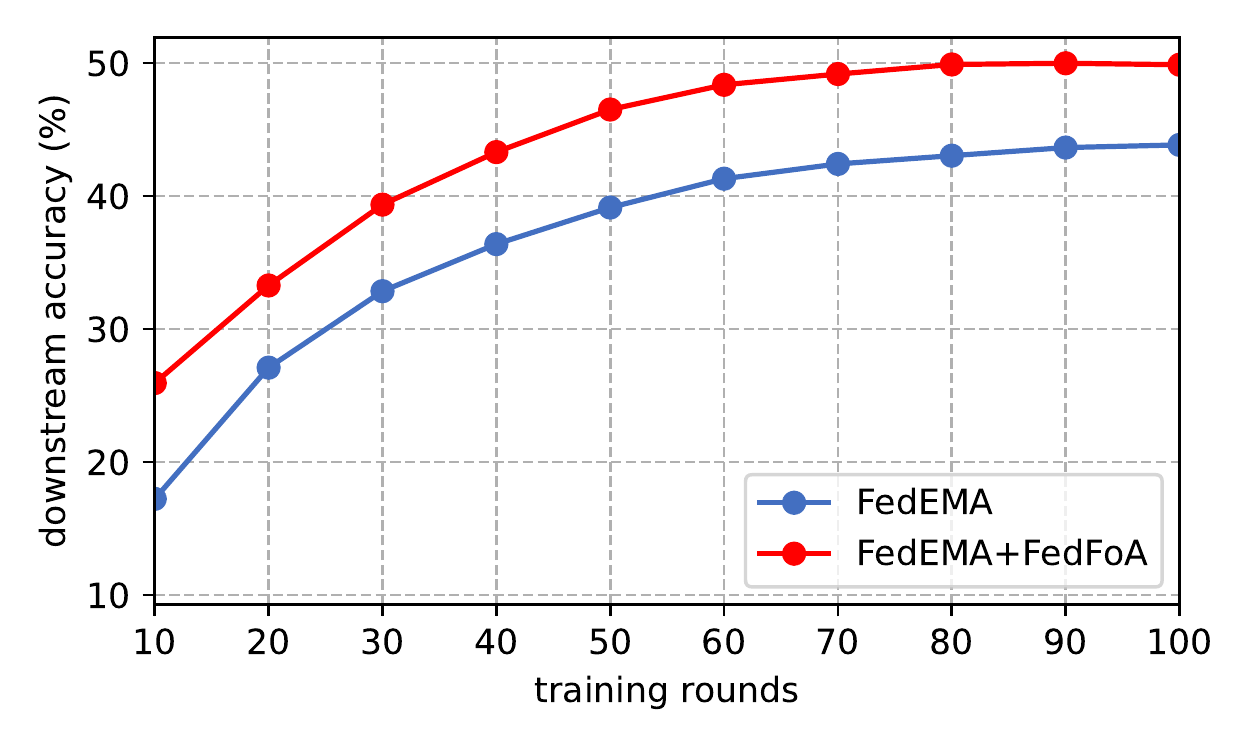}
    \caption{Downstream accuracy of FedEMA and an integrated algorithm of FedEMA and FedFoA under CIFAR-100 task.}
    \label{acc}
\end{figure} 

\begin{figure}
    \centering
    \setlength{\abovecaptionskip}{0mm} 
    \includegraphics[width=0.85\linewidth]{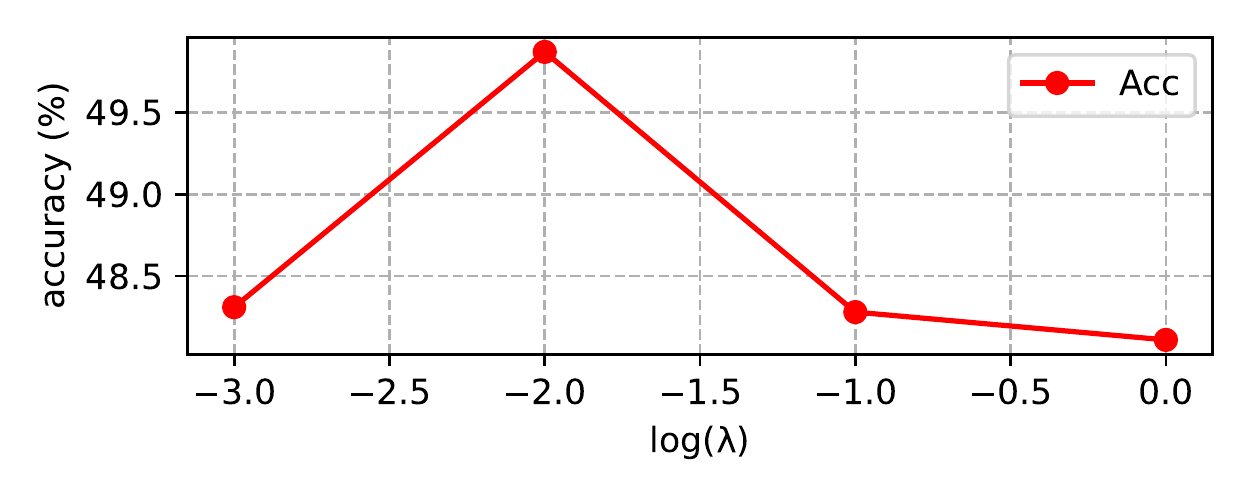}
    \caption{Influence of hyper-parameter $\lambda$. The experiment is conducted via integrated algorithm with FedEMA and FedFoA under CIFAR-100 dataset.}
    \label{lambda}
\end{figure} 

\begin{table}[h]
\centering
\caption{The performance evaluation in CIFAR-10 under the heterogeneous federated learning setting.}
\scalebox{0.85}{
\begin{tabular}{lccccc}
\toprule
\multirow{2}{*}{Method} & 
\multicolumn{4}{c}{Architecture}  & \multirow{2}{*}{Avg.} \\
\cmidrule{2-5}
& ResNet-18 & VGG-9 & AlexNet & ResNet-34 &                          \\
                        \midrule \midrule
BYOL                    & 77.93                            & 73.49                       & 68.74                           & 77.14        & 74.33                        \\ 
FedMD                   & 76.45                             & 74.32                         & 70.13                            & 76.74        & 74.41                        \\ 
FedDF                   & 78.35                             & 74.54                          & 70.89                            & 78.43        & 75.55                        \\ 
\midrule
\textbf{FedFoA}                  & \textbf{80.45}                            & \textbf{75.86}                         & \textbf{71.95}                          & \textbf{81.14}        & \textbf{77.35}                        \\ \bottomrule
\end{tabular}}
\label{table_hetero_cifar10}
\end{table}

\begin{table}[t]
\centering
\caption{The performance evaluation in CIFAR-100 under the heterogeneous federated learning setting. }
\scalebox{0.85}{
\begin{tabular}{lccccc}
\toprule
\multirow{2}{*}{Method} & 
\multicolumn{4}{c}{Architecture}  & \multirow{2}{*}{Avg.} \\
\cmidrule{2-5}
& ResNet-18 & VGG-9 & AlexNet & ResNet-34 &                          \\
                        \midrule \midrule
BYOL                    & 50.61                            & 45.59                       & 39.42                           & 49.52        & 46.29                        \\ 
FedMD                   & 50.37                             & 45.98                         & 41.2                            & 49.17        & 46.68                        \\ 
FedDF                   & 51.35                             & 46.24                          & 41.31                            & 49.87        & 47.19                        \\ 
\midrule
\textbf{FedFoA}                  & \textbf{52.85}                            & \textbf{42.06}                         & \textbf{50.49}                          & \textbf{50.49}        & \textbf{48.07}                        \\ \bottomrule
\end{tabular}}
\label{table_hetero_cifar100}
\end{table}

\begin{table}[t]
\centering
\caption{The performance evaluation under the homogeneous federated learning setting.}
\begin{tabular}{l|cc|cc}
\toprule
\multirow{2}{*}{Method} & \multicolumn{2}{c|}{CIFAR-10} & \multicolumn{2}{c}{CIFAR-100}  \\
                        & Last          & Best          & Last           & Best                  \\ 
                        \midrule\midrule
FedBYOL                 & 73.76             & 74.21             & 42.56              & 42.94               \\ 
\midrule
FedBYOL+FedFoA          & 78.54             & \textbf{79.00}             & 46.89              & \textbf{47.13}               \\ 
\midrule
FedU                    & 77.12             & 77.24             & 44.85              & 45.17                   \\ 
\midrule
FedU+FedFoA             & 78.37             & \textbf{78.71}             & 47.29              & \textbf{47.46}              \\ 
\midrule
FedEMA                  & 75.48             & 76.12             & 43.19              & 43.83                      \\ 
\midrule
FedEMA+FedFoA           & 78.82             & \textbf{78.82}             & 49.84             & \textbf{49.87}           \\
\bottomrule
\end{tabular}
\label{homogeneous}
\end{table}

    

%

\subsection{Visualization}

In this section, we use visualization methods to show the intermediate effects and final feature representation of FedFoA.  

As mentioned in Sec.~\ref{mutual understanding}, FedFoA is designed to enhance mutual understanding among clients. Though FedFoA does not require all clients to share a global feature correlation matrix, the regularization term in Eq.~\ref{loss} still can facilitate clients to learn similar feature correlations. To further prove this, we track the distance of the $R$ matrix among clients in the training process. 
We visualize the result using a heat map in Figure~\ref{heatmap}. The color and number inside each box represent the Euclidean distance of two feature correlation matrix. 
This intermediate result is recorded in the heterogeneous federated system. The number on the x axis and y axis represents the client id. Their model architectures are ResNet-18, VGG-9, AlexNet and ResNet-34 respectivelly.

In the first round, the feature correlation distance is high among clients. The distance between ResNet-18 and ResNet-34 is relatively shorter since they share the same basic block. Their semantic information tends to be similar. With the help of FedFoA, the feature correlation distance keeps decreasing 
in the subsequent training rounds. After convergence, though the most similar pair is still ResNet-18 and ResNet-34, their similarity is no longer significant among clients. This intermediate representation shows that FedFoA successfully narrows the semantic distance among clients and facilitates mutual understanding.

To demonstrate the advantages of FedFoA on the final feature representation, we plot t-SNE distribution in Figure~\ref{tnse}.
We sample 2048 test samples of CIFAR-10 and feed them to the trained global model to extract the feature representations. After embedding the feature representations in a 2-dimensional space, we visualize the embedded data with a different color. Each color represents a category of the dataset. The Figure~\ref{poor} is the learned result of FedEMA, and Figure~\ref{good} is the learned result of the integrated algorithm with FedEMA and FedFoA. The feature representation of integrated algorithm shows more clear class boundaries. For instance, in the figure of integrated method, the green points are clustered together at the right-up corner. In contrast, the green points of FedEMA are scattered and it is hard to find the a boundary that can distinguish green points from other colors. 

\begin{figure}
     \centering
     \begin{subfigure}[b]{0.15\textwidth}
         \centering
         \includegraphics[width=\textwidth]{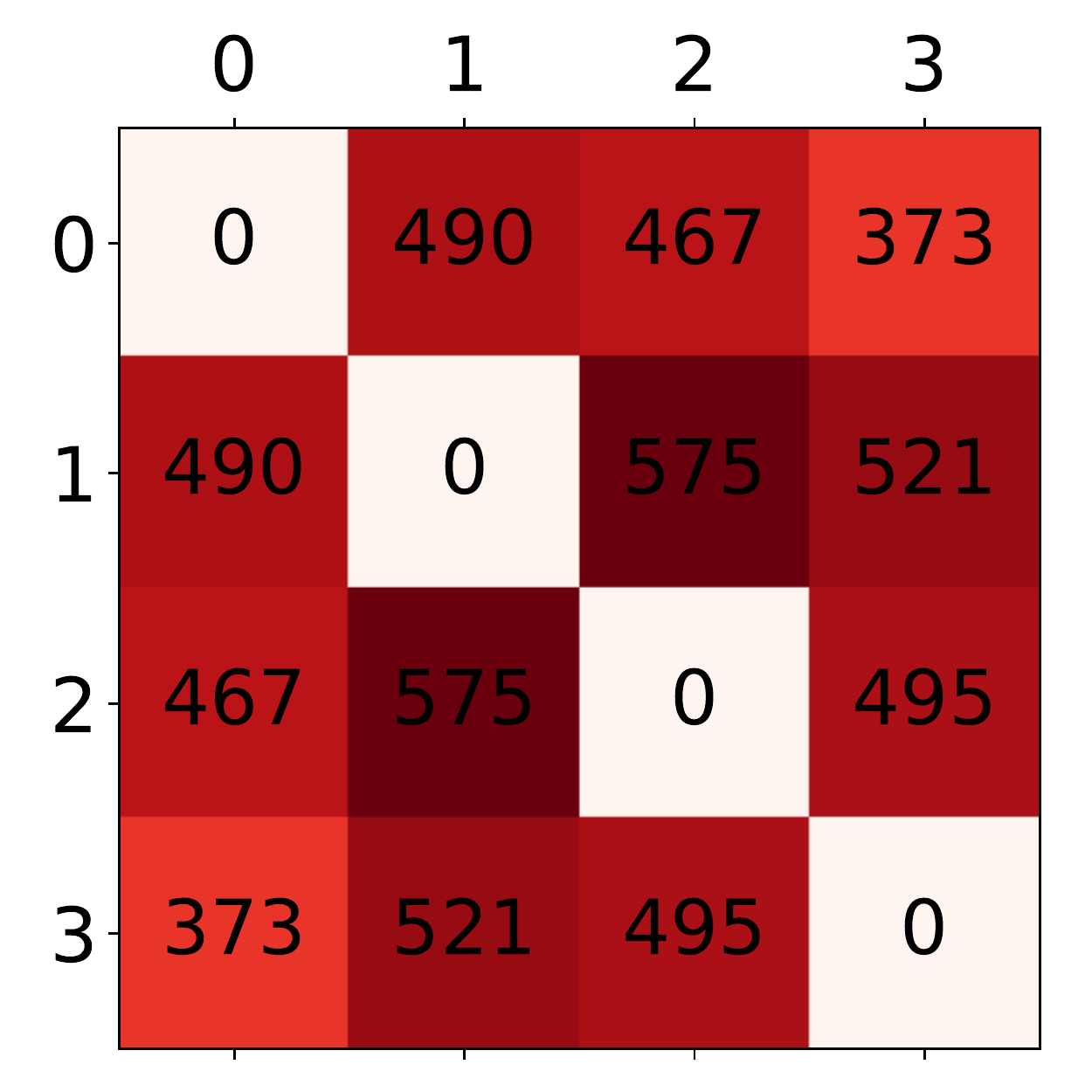}
         \caption{Round 0}
         \label{Round 0}
     \end{subfigure}
     \hfill
     \begin{subfigure}[b]{0.15\textwidth}
         \centering
         \includegraphics[width=\textwidth]{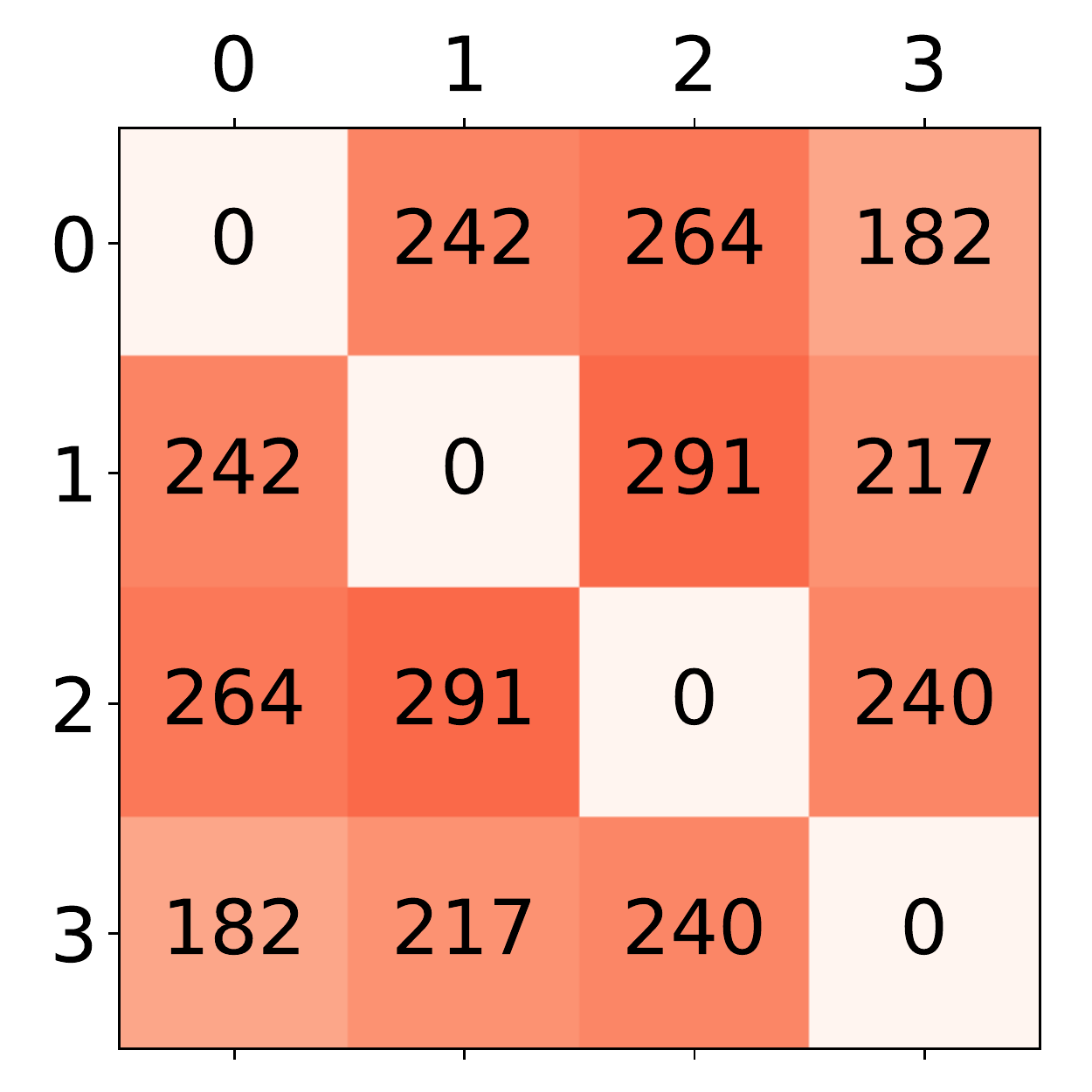}
         \caption{Round 29}
         \label{fig:three sin x}
     \end{subfigure}
     \hfill
     \begin{subfigure}[b]{0.15\textwidth}
         \centering
         \includegraphics[width=\textwidth]{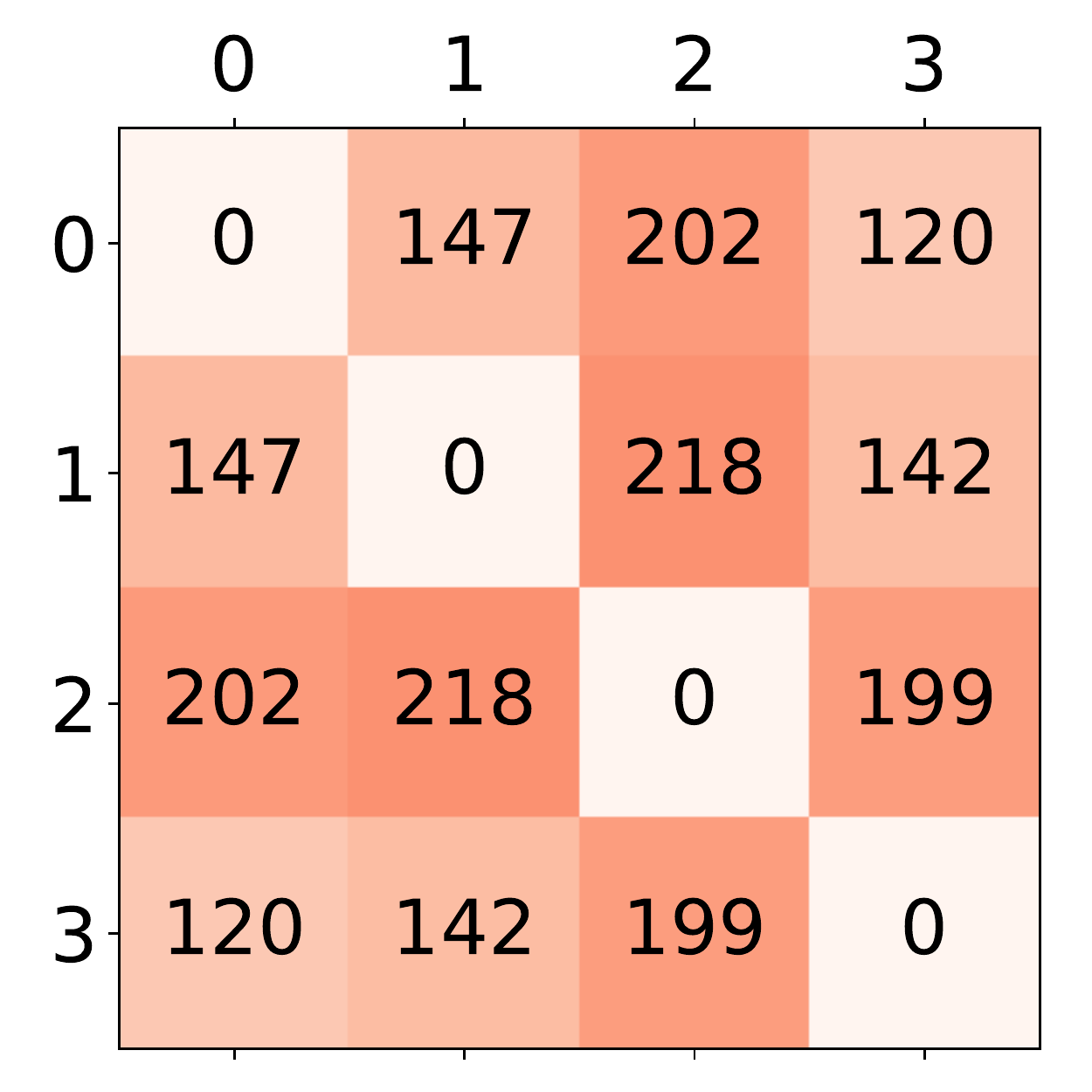}
         \caption{Round 49}
         \label{fig:five over x}
     \end{subfigure}
        \label{fig:three graphs}
    \caption{The feature correlation matrix distance. (a), (b) and (c) are three heat-map that represent the cross-client feature correlation distance at different training process. Heavier color means higher distance. }
    \vspace{-0.2cm}
    \label{heatmap}
\end{figure}

\begin{figure}
     \centering
     \begin{subfigure}[b]{0.236\textwidth}
         \centering
         \includegraphics[width=\textwidth]{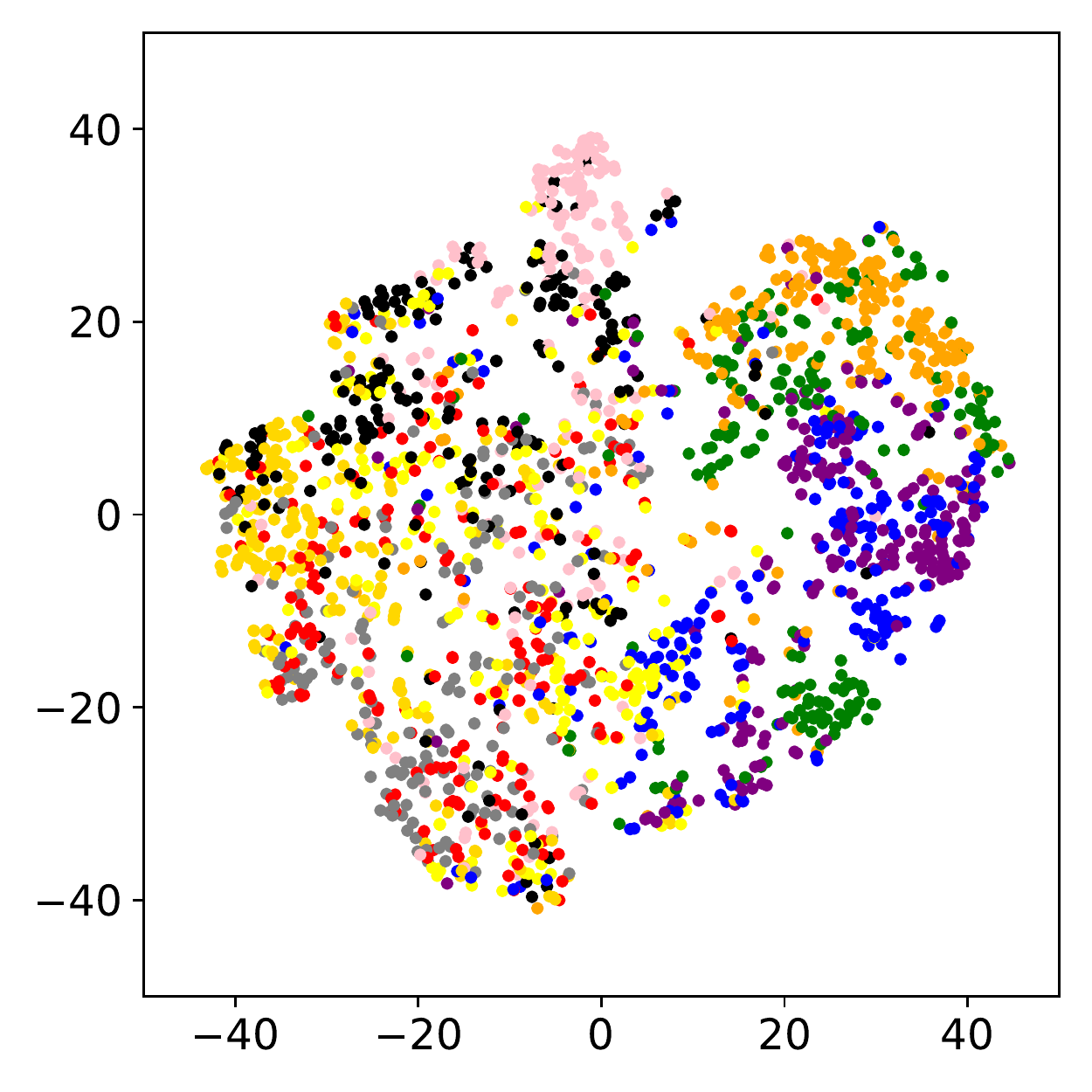}
         \caption{FedEMA}
         \label{poor}
     \end{subfigure}
     \hfill
     \begin{subfigure}[b]{0.236\textwidth}
         \centering
         \includegraphics[width=\textwidth]{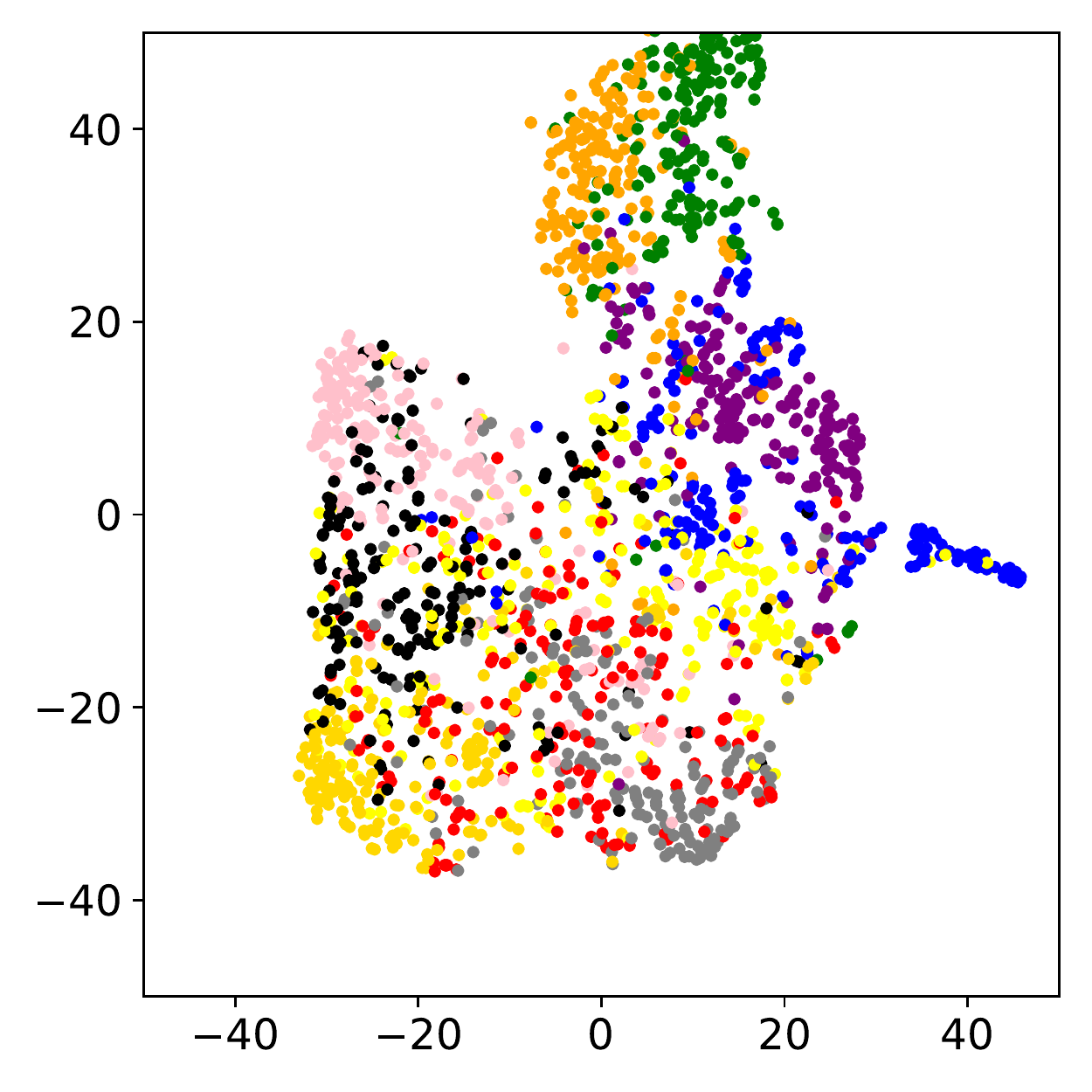}
         \caption{FedEMA+FedFoA}
         \label{good}
     \end{subfigure}
     \caption{Improvement brought by FedFoA. (a) and (b) are t-SNE visualization of learned representations in CIFAR-10 dataset at training round 49.}
     \label{tnse}
\vspace{-0.4cm}
\end{figure}


\vspace{-0.4cm}

\section{Conclusion}
In this paper, we propose a novel method called FedFoA, to achieve a data-free and model-agnostic collaborative training framework in a communication-efficient and privacy-preserving manner.
Specifically, we design a factorization-based method to extract shared feature correlation from local data and use it as a medium for knowledge transfer.
Then, the relation matrix can be regarded as a carrier of semantic information to perform the aggregation phase.
We prove that FedFoA is a general training framework and can be easily compatible with state-of-the-art unsupervised FL methods in a plug-and-play manner. 
Extensive empirical experiments on various benchmarks and downstream tasks demonstrate that FedFoA outperforms the state-of-the-art methods by a significant margin in terms of model accuracy and communication efficiency. 

{\small
\bibliographystyle{ieee_fullname}
\bibliography{PaperForReview}
}

\end{document}